\theoremstyle{definition}
\newtheorem{axiom}{Axiom}
\newcommand{\nearest}{{n_1}}
\newcommand{\second}{{n_2}}
\newcommand{\MS}{\ensuremath{\mathrm{\tilde{S}}}}
\newcommand{\ms}{\ensuremath{\mathrm{\tilde{s}}}}
\newcommand{\tsum}{\textstyle\sum\nolimits}
\newcommand{\mean}{\operatorname{mean}}
\newcommand{\argmax}{\operatorname{arg\,max}}
\newcommand{\argmin}{\operatorname{arg\,min}}
\pgfplotsset{compat=newest}
\begin{document}
\title{Clustering by Direct Optimization of the Medoid Silhouette\thanks{Part of the work on this paper has been supported by Deutsche Forschungsgemeinschaft (DFG) -- project number 124020371 -- within the Collaborative Research Center SFB 876 ``Providing Information by Resource-Constrained Analysis'' project A2.}}
\author{Lars Lenssen\orcidID{0000-0003-0037-0418} \and
Erich Schubert\orcidID{0000-0001-9143-4880}}
\authorrunning{L. Lenssen and E. Schubert}
\institute{TU Dortmund University, Informatik VIII, 44221 Dortmund, Germany 
\email{\{lars.lenssen,erich.schubert\}@tu-dortmund.de}}
\maketitle              %
\begin{abstract}
The evaluation of clustering results is difficult, highly dependent on the evaluated data set and the perspective of the beholder. There are many different clustering quality measures, which try to provide a general measure to validate clustering results. A very popular measure is the Silhouette. We discuss the efficient medoid-based variant of the Silhouette, perform a theoretical analysis of its properties, and provide two fast versions for the direct optimization. We combine ideas from the original Silhouette with the well-known PAM algorithm and its latest improvements FasterPAM. One of the versions guarantees equal results to the original variant and provides a run speedup of $O(k^2)$. In experiments on real data with 30000 samples and $k$=100, we observed a 10464$\times$ speedup compared to the original PAMMEDSIL algorithm.

\end{abstract}
\section{Introduction}
\begin{tikzpicture}[overlay, remember picture]
\node[red, yshift=-20mm, anchor=north, text width=115mm] at (current page.north) {
\textbf{Preprint} version. Please consult the final version of record instead:
\\
Lars Lenssen, Erich Schubert:
Clustering by Direct Optimization of the Medoid Silhouette.
Similarity Search and Applications (SISAP 2022)\\
\url{https://doi.org/10.1007/978-3-031-17849-8_15}
};
\end{tikzpicture}%
In cluster analysis, the user is interested in discovering previously
unknown structure in the data, as opposed to classification
where one predicts the known structure (labels) for new data points.
Sometimes, clustering can also be interpreted as data quantization and approximation,
for example $k$-means which aims at minimizing the sum of squared errors when approximating
the data with $k$ average vectors, spherical $k$-means which aims to maximize the cosine similarities
to the $k$ centers, and $k$-medoids which minimizes the sum of distances
when approximating the data by $k$ data points.
Other clustering approaches such as DBSCAN \cite{DBLP:conf/kdd/EsterKSX96,DBLP:journals/tods/SchubertSEKX17}
cannot easily be interpreted this way, but discover structure related
to connected components and density-based minimal spanning trees \cite{DBLP:conf/lwa/SchubertHM18}.

The evaluation of clusterings is a challenge, as there are no labels available.
While many internal (``unsupervised'', not relying on external labels)
evaluation measures were proposed such as the Silhouette~\cite{Rousseeuw/87a},
Davies-Bouldin index, the Variance-Ratio criterion, the Dunn index, and many more,
using these indexes for evaluation suffers from inherent problems.
Bonner \cite{DBLP:journals/ibmrd/Bonner64} noted that
``none of the many specific definitions [...] seems best in any general sense'',
and results are subjective ``in the eye of the beholder'' as noted by
Estivill-Castro~\cite{DBLP:journals/sigkdd/Estivill-Castro02}.
While these claims refer to clustering methods, not evaluation methods,
we argue that these do not differ substantially:
each internal cluster evaluation method implies a clustering algorithm obtained %
by enumeration of all candidate clusterings, keeping the best.
The main difference between clustering algorithms and internal evaluation then
is whether or not we know an efficient optimization strategy. $K$-means 
is an optimization strategy for the sum of squares evaluation measure,
while the $k$-medoids algorithms PAM, and Alternating are different strategies for
optimizing the sum of deviations from a set of $k$ representatives.

In this article, we focus on the evaluation measure known as
Silhouette~\cite{Rousseeuw/87a}, %
and discuss an efficient algorithm to optimize a variant of this measure,
inspired by the well-known PAM algorithm \cite{Kaufman/Rousseeuw/87a, Kaufman/Rousseeuw/90c}
and FasterPAM~\cite{DBLP:journals/is/SchubertR21,DBLP:conf/sisap/SchubertR19}.

\section{Silhouette and Medoid Silhouette}

The Silhouette~\cite{Rousseeuw/87a} is a popular measure to evaluate clustering validity, and
performs very well in empirical studies \cite{DBLP:journals/pr/ArbelaitzGMPP13,DBLP:journals/pr/BrunSHLCSD07}.
For the given samples $X {=} \{x_1,\ldots,x_n\}$, a dissimilarity measure $d:X{\times} X\mapsto \mathbb{R}$,
and the cluster labels $L{=}\{l_1,\ldots,l_n\}$,
the Silhouette for a single element $i$ is calculated based on the average distance to its own cluster
$a_i$ and the smallest average distance to another cluster $b_i$ as:
\begin{align*}
s_i(X, d, L) &= \tfrac{b_i-a_i}{\max(a_i,b_i)}
\;\text{, where}\\
a_i &= \phantom{\min\nolimits_{k\neq l_i}\;} \mean \left\{d(x_i, x_j) \mid l_j = l_i\right\}
\\
b_i &= \min\nolimits_{k\neq l_i}\;\mean\left\{d(x_i, x_j) \mid l_j = k\right\}
\;.
\end{align*}
The motivation is that ideally, each point is much closer to the cluster it is assigned to,
than to another ``second closest'' cluster. For $b_i{\gg} a_i$, the Silhouette approaches 1,
while for points with $a_i{=}b_i$ we obtain a Silhouette of 0, and negative values can arise
if there is another closer cluster and hence $b_i{<}a_i$.
The Silhouette values $s_i$ can then be used to visualize the cluster quality by sorting
objects by label $l_i$ first, and then by descending $s_i$, to obtain the Silhouette plot.
However, visually inspecting the Silhouette plot is only feasible for small data sets,
and hence it is also common to aggregate the values into a single statistic, often referred to
as the Average Silhouette Width (ASW):
\begin{align*}
S(X, d, L) = \tfrac{1}{n} \tsum_{i=1}^n s_i(X, d, L)
\;.
\end{align*}
Hence, this is a function that maps a data set, dissimilarity, and cluster labeling to a real number,
and this measure has been shown to satisfy
desirable properties for clustering quality measures (CQM) by Ackerman and Ben-David \cite{DBLP:conf/nips/Ben-DavidA08}.

A key limitation of the Silhouette is its computational cost. It is easy to see that it requires
all pairwise dissimilarities, and hence takes $O(N^2)$ time to compute -- much more than popular
clustering algorithms such as $k$-means.

For algorithms such as $k$-means and $k$-medoids, a simple approximation to the Silhouette is
possible by using the distance to the cluster center respectively medoids
$M=\{M_1,\ldots,M_k\}$ instead of the average distance.
For this ``simplified Silhouette'' (which can be computed in $O(N k)$ time,
and which Van der Laan et al.~\cite{VanderLaan/03a} called medoid-based Silhouette)
we use
\begin{align*}
s_i'(X, d, M) &= \tfrac{b_i'-a_i'}{\max(a_i',b_i')}
\;\text{, where}\\
a_i' &= \phantom{\min\nolimits_{k\neq l_i}\;{}} d(x_i, M_{l_i})
\\
b_i' &= \min\nolimits_{k\neq l_i}\; d(x_i, M_k)
\;.
\end{align*}

If each point is assigned to the closest cluster center (optimal for $k$-medoids and the Silhouette),
we further know that $a_i'\leq b_i'$ and $s_i\geq 0$,
and hence this can further be simplified to
the \emph{Medoid Silhouette}
\begin{align*}
\tilde{s}_i(X, d, M) &= \tfrac{d_2(i)-d_1(i)}{d_2(i)}
= 1 - \tfrac{d_1(i)}{d_2(i)}
\;.
\end{align*}
where $d_1$ is the distance to the closest and $d_2$ to the second closest center in~$M$.
For $d_1(i)=d_2(i)=0$, we add a small $\varepsilon$ to $d_2(i)$ to get $\tilde{s}=1$.
The Average Medoid Silhouette (AMS) then is defined as the average
\begin{align*}
\tilde{S}(X, d, M) = \tfrac{1}{n} \tsum_{i=1}^n \tilde{s}_i(X, d, M)
\;.
\end{align*}
It can easily be seen that the optimum clustering is the (assignment of points to the) set of medoids
such that we minimize an ``average relative loss``:
\begin{align*}
\argmax_M \tilde{S}(X, d, M) = \argmin_M \mean_i \tfrac{d_1(i)}{d_2(i)}
\;.
\end{align*}
For clustering around medoids, we impose the restriction $M\subseteq X$;
which has the benefit of not restricting the input data to be numerical
(e.g., $X\subset\mathbb{R}^d$, as in $k$-means), and allowing non-metric
dissimilarity functions~$d$.

\section{Related Work}

The Silhouette~\cite{Rousseeuw/87a} was originally proposed along with
Partitioning Around Medoids (PAM,~\cite{Kaufman/Rousseeuw/87a, Kaufman/Rousseeuw/90c}),
and indeed $k$-medoids already does a decent job at finding a good solution,
although it does optimize a different criterion (the sum of total deviations).
Van der Laan et al.~\cite{VanderLaan/03a} proposed to optimize the Silhouette by substituting
the Silhouette evaluation measure into the PAM SWAP procedure (calling this PAMSIL).
Because they recompute the loss function each time (as opposed to PAM, which computes the change),
the complexity of PAMSIL is $O(k(N-k) N^2)$,
since for each of $k\cdot (N-k)$ possible swaps, the Silhouette is computed in $O(N^2)$.
Because this yields a very slow clustering method, they also considered the Medoid Silhouette
instead (PAMMEDSIL), which only needs $O(k^2(N-k)N)$ time (but still considerably more than PAM).

Schubert and Rousseeuw \cite{DBLP:journals/is/SchubertR21,DBLP:conf/sisap/SchubertR19} recently improved the PAM method,
and their FastPAM approach reduces the cost of PAM by a factor of $O(k)$,
making the method $O(N^2)$ by the use of a shared accumulator to avoid the innermost loop.
In this work, we will combine ideas from this algorithm with the PAMMEDSIL approach above,
to optimize the Medoid Silhouette with a swap-based local search, but a run time comparable
to FastPAM. But we will first perform a theoretical analysis of the properties of the Medoid
Silhouette, to show that it is worth exploring as an alternative to the original Silhouette.

\section{Axiomatic Characterization of Medoid Clustering}
We follow the axiomatic approach of Ackerman and Ben-David~\cite{DBLP:conf/nips/Ben-DavidA08},
to prove the value of using the Average Medoid Silhouette (AMS) as a clustering quality measure (CQM).
Kleinberg~\cite{Kleinberg/Jon/02a} defined three axioms for clustering functions
and argued that no clustering algorithm can satisfy these desirable properties at the same time,
as they contradict.
Because of this, Ackermann and Ben-David~\cite{DBLP:conf/nips/Ben-DavidA08}
weaken the original Consistency Axiom and extract four axioms for clustering quality measures:
\emph{Scale Invariance} and \emph{Richness} are defined analogously to the Kleinberg Axioms. We redefine the CQM axioms~\cite{DBLP:conf/nips/Ben-DavidA08} for medoid-based clustering.
\begin{definition}
\label{d1}
For given data points $X = \{x_1,\ldots,x_n\}$ with a set of $k$ medoids $M = \{m_1,\ldots,m_k\}$ and a dissimilarity $d$, we write $x_i \sim_M x_{i'}$ whenever $x_i$ and~$x_{i'}$ have the same nearest medoid $\nearest(i) \subseteq M$, otherwise $x_i \not\sim_M x_{i'}$.
\end{definition}
\begin{definition}
\label{d2}
Dissimilarity $d'$ is an M-consistent variant of $d$, if $d'(x_i, x_{i'}) \leq d(x_i, x_{i'})$ for $x_i \sim_M x_{i'}$, and $d'(x_i, x_{i'}) \geq d(x_i, x_{i'})$ for $x_i \not\sim_M x_{i'}$.
\end{definition}
\begin{definition}
\label{d3}
Two sets of medoids $M, M' \subseteq X$ with a distance function $d$ over~$X$,
are isomorphic, if there exists a distance-preserving isomorphism $\phi : X \to X$,
such that for all $x_i, x_{i'} \in X$, $x_i \sim_M x_{i'}$ if and only if $\phi(x_i) \sim_{M'} \phi(x_{i'})$.
\end{definition}
\begin{axiom}[Scale Invariance]
\label{a1}
A medoid-based clustering quality measure~$f$ satisfies scale invariance if for every set of medoids $M \subseteq X$ for $d$, and every positive~$\lambda$, $f(X, d, M) = f(X, \lambda d, M)$.
\end{axiom}
\begin{axiom}[Consistency]
\label{a2}
A medoid-based clustering quality measure~$f$ satisfies consistency if for a set of medoids $M \subseteq X$ for $d$, whenever $d'$ is an \mbox{M-consistent} variant of~$d$, then $f(X, d', M) \geq f(X, d, M)$. 
\end{axiom}
\begin{axiom}[Richness]
\label{a3}
A medoid-based clustering quality measure~$f$ satisfies richness if for each set of medoids $M \subseteq X$, there exists a distance function $d$ over~$X$ such that $M = \argmax_{M}f(X, d, M)$.
\end{axiom}
\begin{axiom}[Isomorphism Invariance]
\label{a4}
A medoid-based clustering quality measure~$f$ is isomorphism-invariant if for all
sets of medoids $M, M' \subseteq X$ with distance $d$ over~$X$ where $M$ and $M'$ are isomorphic, $f(X, d, M) = f(X, d, M')$. 
\end{axiom}

Batool and Hennig~\cite{Batool/Hennig/21a} prove that the ASW
satisfies the original CQM axioms.
We prove the first three adapted axioms for the Average Medoid Silhouette.
The fourth, Isomorphism Invariance, is obviously fulfilled, since AMS is based only on dissimilarites, just as the ASW~\cite{Batool/Hennig/21a}.
\begin{theorem}
The AMS is a \emph{scale invariant} clustering quality measure.
\end{theorem}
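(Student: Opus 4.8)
The plan is to observe that multiplying the dissimilarity by a positive constant rescales the two relevant distances $d_1(i)$ and $d_2(i)$ of every point by the same factor, so the ratio that defines the Medoid Silhouette of each point is left unchanged, and hence so is their average $\tilde{S}$.

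Concretely, I would fix a medoid set $M\subseteq X$ and a scalar $\lambda>0$. The first step is to note that since $t\mapsto\lambda t$ is strictly increasing on $\mathbb{R}_{\geq 0}$, for every point $x_i$ the medoids of $M$ are ordered by distance in exactly the same way under $d$ and under $\lambda d$; in particular the nearest medoid $\nearest(i)$ and the second-nearest medoid $\second(i)$ coincide for the two dissimilarities. Writing $d_1^{\lambda d}(i)$ and $d_2^{\lambda d}(i)$ for the distances to the closest and second-closest medoid computed with $\lambda d$, this gives $d_1^{\lambda d}(i)=\lambda\,d_1(i)$ and $d_2^{\lambda d}(i)=\lambda\,d_2(i)$.

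The second step is the substitution into the definition of $\tilde{s}_i$. In the generic case $d_2(i)>0$ one obtains
\begin{align*}
\tilde{s}_i(X,\lambda d, M) = 1-\tfrac{d_1^{\lambda d}(i)}{d_2^{\lambda d}(i)} = 1-\tfrac{\lambda\,d_1(i)}{\lambda\,d_2(i)} = 1-\tfrac{d_1(i)}{d_2(i)} = \tilde{s}_i(X, d, M)\;,
\end{align*}
and in the degenerate case $d_1(i)=d_2(i)=0$ the scaled distances are again both $0$, so the same additive-$\varepsilon$ correction applies and returns $\tilde{s}_i=1$ under both $d$ and $\lambda d$. Averaging over $i$ and using the definition $\tilde{S}(X,\cdot,M)=\tfrac1n\tsum_i\tilde{s}_i(X,\cdot,M)$ then yields $\tilde{S}(X,\lambda d,M)=\tilde{S}(X,d,M)$, which is the claim.

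I do not expect a real obstacle here; the core of the argument is a single cancellation of $\lambda$. The only points that deserve an explicit word are (i) that positivity of $\lambda$ is precisely what guarantees that the nearest and second-nearest medoids do not change, so that $d_1$ and $d_2$ scale exactly, and (ii) that the all-zero degenerate case handled by the $\varepsilon$-correction is itself invariant under scaling. Both are immediate consequences of $\lambda>0$.
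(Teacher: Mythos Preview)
Your proposal is correct and follows essentially the same approach as the paper: both arguments observe that $d_1(i)$ and $d_2(i)$ are each multiplied by $\lambda$ and hence the ratio defining $\tilde{s}_i$ is unchanged, so the average $\tilde{S}$ is unchanged. Your write-up is slightly more careful in making explicit why $\lambda>0$ preserves the identity of the nearest and second-nearest medoids and in handling the $d_1(i)=d_2(i)=0$ case, but these are refinements of the same one-line cancellation argument the paper gives.
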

\begin{proof}
If we replace $d$ with $\lambda d$, both $d_1(i)$ and $d_2(i)$ are multiplied by $\lambda$, and the term will cancel out. Hence, $\tilde{s}_i$ does not change for any $i$:
\begin{align*}
\tilde{S}(X, \lambda d, M) &= \tfrac{1}{n} \tsum_{i=1}^n \tilde{s}_i(X, \lambda d, M) 
= \tfrac{1}{n} \tsum_{i=1}^n \tfrac{\lambda d_2(i)-\lambda d_1(i)}{\lambda d_2(i)} \\
&= \tfrac{1}{n} \tsum_{i=1}^n \tfrac{ d_2(i)- d_1(i)}{ d_2(i)} 
= \tfrac{1}{n} \tsum_{i=1}^n \tilde{s}_i(X, d, M) 
= \tilde{S}(X, d, M)
\;.
\end{align*}
\end{proof}
\begin{theorem}
The AMS is a \emph{consistent} clustering quality measure.
\end{theorem}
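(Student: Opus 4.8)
The plan is to prove the stronger pointwise claim that $\tilde{s}_i(X, d', M) \geq \tilde{s}_i(X, d, M)$ holds for every~$i$; then averaging over $i$ and dividing by~$n$ yields $\tilde{S}(X, d', M) \geq \tilde{S}(X, d, M)$ immediately. So I fix a point $x_i$, write $m_a$ for its nearest medoid under~$d$, and note $d_1(i) = d(x_i, m_a)$ and $d_2(i) = \min_{b \neq a} d(x_i, m_b)$.

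First I would place the distances from $x_i$ to the medoids on the correct side of Definition~\ref{d2}. Since $M \subseteq X$ and $d(m,m) = 0$, each medoid $m_b$ is its own nearest medoid and therefore lies in cluster~$b$; hence $x_i \sim_M m_a$ while $x_i \not\sim_M m_b$ for all $b \neq a$. Applying M-consistency then gives $d'(x_i, m_a) \leq d(x_i, m_a) = d_1(i)$ and $d'(x_i, m_b) \geq d(x_i, m_b) \geq d_2(i)$ for every $b \neq a$.

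Next I would check that the nearest medoid of $x_i$ does not change: the chain $d'(x_i, m_a) \leq d_1(i) \leq d_2(i) \leq d'(x_i, m_b)$ shows that $m_a$ is still a nearest medoid under~$d'$, so $d'_1(i) = d'(x_i, m_a) \leq d_1(i)$ and $d'_2(i) = \min_{b \neq a} d'(x_i, m_b) \geq d_2(i)$. It follows that $d'_1(i)/d'_2(i) \leq d_1(i)/d_2(i)$, hence $\tilde{s}_i(X, d', M) = 1 - d'_1(i)/d'_2(i) \geq 1 - d_1(i)/d_2(i) = \tilde{s}_i(X, d, M)$, as desired.

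The computation itself is routine, so the part needing the most care will be the degenerate configurations. If several medoids tie for nearest to $x_i$ under~$d$, I would fix the tie-break used to define $\sim_M$; then $d_1(i) = d_2(i)$ forces $\tilde{s}_i(X, d, M) = 0$, the minimum possible value, so the inequality is vacuous for that~$i$ while the argument above still yields a valid nearest medoid under~$d'$. Similarly, when $d_1(i) = d_2(i) = 0$ the $\varepsilon$-convention sets $\tilde{s}_i(X, d, M) = 1$, the maximum, so nothing can be lost there either. With these cases dispatched, the pointwise inequality holds unconditionally and the theorem follows.
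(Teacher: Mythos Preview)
Your argument is correct and follows the same route as the paper's proof: establish the pointwise inequality $\tilde{s}_i(X,d',M)\ge\tilde{s}_i(X,d,M)$ by showing $d'_1(i)\le d_1(i)$ and $d'_2(i)\ge d_2(i)$, then average; you are in fact more explicit than the paper in checking that medoids lie in their own clusters and that the nearest medoid of $x_i$ is preserved under~$d'$. One small slip: in the $d_1(i)=d_2(i)=0$ case, being at the \emph{maximum} value $1$ does not by itself preclude a drop---the correct justification is that $d'_1(i)\le d_1(i)=0$ forces $d'_1(i)=0$, so $\tilde{s}_i(X,d',M)=1$ regardless of $d'_2(i)$.
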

\begin{proof}
Let dissimilarity $d'$ be a M-consistent variant of $d$.
By Definition~\ref{d2}: $d'(x_i, x_{i'} ) \leq d(x_i, x_{i'} )$ for all $x_i \sim_{M} x_{i'}$, and
$\min_{x_i \not\sim_{M} x_{i'}}d'(x_i, x_{i'} ) \geq \min_{x_i \not\sim_{M} x_{i'}}d(x_i, x_{i'} )$.
This implies for all $i \in \mathbb{N}$:
$d'_1(i) \leq d_1(i), d'_2(i) \geq d_2(i)$
and it follows:
\begin{align*}
\tfrac{d_1(i)}{ d_2(i)} - \tfrac{ d'_1(i)}{ d'_2(i)} &\geq 0
\quad\Leftrightarrow\quad  %
\tfrac{ d'_2(i)- d'_1(i)}{ d'_2(i)} - \tfrac{ d_2(i)- d_1(i)}{ d_2(i)} \geq 0
\end{align*}
which is equivalent to $\forall_i \; \tilde{s}_i(X, d', M) \geq \tilde{s}_i(X, d, M)$,
hence $\tilde{S}(X, d', M) \geq \tilde{S}(X, d, M)$, i.e., AMS is a consistent clustering quality measure.
\end{proof}
\begin{theorem}
The AMS is a \emph{rich} clustering quality measure.
\end{theorem}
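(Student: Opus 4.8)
The plan is to produce, for an arbitrary target set of medoids $M=\{m_1,\dots,m_k\}\subseteq X$ (take $k\ge 2$ and $n:=\lvert X\rvert>k$, the case $n=k$ forcing $M=X$ and being vacuous), a dissimilarity $d$ on $X$ that makes $M$ the \emph{unique} maximiser of $\tilde{S}(X,d,\cdot)$ over all $k$-subsets, i.e.\ $M=\argmax_M\tilde{S}(X,d,M)$; this is the medoid analogue of the standard Kleinberg / Ackerman--Ben-David richness witness, where the intended clustering is made ``perfect'' and every competing one strictly worse. Concretely, I would single out $m_1$ as a ``hub'': place every non-medoid point $p_i$ at distance $\varepsilon$ from $m_1$, set $d(m_1,m_j)=1-\eta$ for $j\ge 2$, and set all remaining pairs (two non-hub medoids; a non-hub medoid and a $p_i$; two non-medoids) to distance $1$, with $0<\varepsilon<1-\eta$, $0<\eta<1$, and $\varepsilon$ taken very small. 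This $d$ is symmetric with zero diagonal and need not satisfy the triangle inequality --- which the paper explicitly permits.

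First I would evaluate the target: each $m_j$ has $d_1=0$, so $\tilde{s}_{m_j}=1$, and each $p_i$ has $d_1(i)=\varepsilon$, $d_2(i)=1$, so $\tilde{s}_{p_i}=1-\varepsilon$; hence $n\,\tilde{S}(X,d,M)=n-(n-k)\varepsilon$, just below the trivial bound $\tilde{s}_i\le 1$. Then for an arbitrary $M'\ne M$ I would write $a:=\lvert M'\setminus M\rvert=\lvert M\setminus M'\rvert\ge 1$; the $a$ new members of $M'$ are non-medoids and $a$ original medoids are discarded. Every point of $M'$ has $d_1=0$ and $d_2>0$ (using $k\ge 2$), hence contributes $1$ to $n\,\tilde{S}(X,d,M')$, so the comparison reduces to the $n-k$ points outside $M'$, split into two cases. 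If $m_1\in M'$: a discarded $m_j$ has $d_1=1-\eta$, $d_2=1$ (contribution $\eta$) while a non-selected $p_i$ still reaches the hub (contribution $1-\varepsilon$), and a short computation yields $n\big(\tilde{S}(X,d,M)-\tilde{S}(X,d,M')\big)=a\big((1-\eta)-\varepsilon\big)>0$. If $m_1\notin M'$: every point outside $M'$ is at distance $1$ from all of $M'$ except that $m_1$ itself can still reach a selected $p_i$ at distance $\varepsilon$ (and a retained $m_j$ at distance $1-\eta$), so these points all contribute $0$ except $m_1$, whose contribution is at most $1-\tfrac{\varepsilon}{1-\eta}$ (smaller still if $M'$ retains no non-hub medoid); thus $n\,\tilde{S}(X,d,M')\le k+1-\tfrac{\varepsilon}{1-\eta}$, strictly below $n-(n-k)\varepsilon$ when $n\ge k+2$ (for small $\varepsilon$) and reducing, when $n=k+1$, to $1-\tfrac{\varepsilon}{1-\eta}<1-\varepsilon$. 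Either way $M$ strictly beats $M'$, which is exactly richness.

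The delicate part is \emph{strict} optimality uniformly over all competitors --- $M$ being literally the argmax, not merely \emph{a} maximiser --- and in particular the degenerate regimes ($n=k+1$, and competitors that drop the hub). The obvious witness with within-group distances $0$ and between-group distances $1$ already attains the maximal value $\tilde{S}=1$, but it ties with every $M'$ that keeps one representative per group; the two perturbations do precisely the tie-breaking --- $\varepsilon$ forces the hub to be the uniquely best representative of its group, and $\eta$ breaks the symmetric $m_1\leftrightarrow p_i$ tie that otherwise appears when $n=k+1$ --- and the case bookkeeping has to cover every combination of whether $m_1\in M'$ and the value of $a$ so that no competitor slips through.
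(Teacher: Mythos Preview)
Your construction is the same hub idea the paper uses: make $m_1$ the unique attractor for all non-medoids, so that $M$ scores (essentially) maximally while every competitor leaves some point stranded at equal first and second distances. The paper does this with a bare $0/1$ dissimilarity---$d(x_i,x_j)=0$ iff $i=j$ or one of the two is $m_1$ and the other a non-medoid, else $1$---obtaining $\tilde S(X,d,M)=1$ exactly, and then observes that any $M'\neq M$ omits some $x_i\in M\setminus M'$ whose distances to everything else are $1$, forcing $\tilde s_i=0$ and $\tilde S(X,d,M')<1$.

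Your two perturbations are not needed for that argument, but they do buy something the paper's version does not quite deliver: \emph{strict} uniqueness in the boundary case $n=k+1$. With the paper's $0/1$ distances, if the single non-medoid $p$ replaces $m_1$ to form $M'=\{m_2,\dots,m_k,p\}$, then $m_1$ still sees $p$ at distance $0$, so $\tilde S(X,d,M')=1$ as well and $M$ is only \emph{a} maximiser. Your $\eta>0$ breaks precisely this tie (and $\varepsilon>0$ prevents any non-medoid from standing in for $m_1$ in the generic case). So: same route, but your bookkeeping is the one that matches a literal reading of ``$M=\argmax$'' rather than ``$M\in\argmax$''.
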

\begin{proof}
We can simply encode the desired set of medoids $M$ in our dissimilarity~$d$.
We define $d(x_i,x_j)$ such that it is~0 if trivially $i=j$,
or if $x_i$ or $x_j$ is the first medoid $m_1$ and the other is not a medoid itself.
Otherwise, let the distance be~1.

For $M$ we then obtain $\tilde{S}(X, d, M)=1$,
because $d_1(i)=0$ for all objects, as either $x_i$ is a medoid itself, or can
be assigned to the first medoid~$m_1$. This is the maximum possible Average Medoid Silhouette.
Let $M'\neq M$ be any other set of medoids. Then there exists at least one missing
$x_i\in M\setminus M'$. For this object $\tilde{s}_i(X, d, M)=0$ (as its distance to all other objects is 1, and it is not in~$M'$), and hence $\tilde{S}(X, d, M')<1=\tilde{S}(X, d, M)$.
\end{proof}

\section{Direct Optimization of Medoid Silhouette}
PAMSIL~\cite{VanderLaan/03a} is a modification of PAM~\cite{Kaufman/Rousseeuw/87a, Kaufman/Rousseeuw/90c}
to optimize the ASW.
For PAMSIL, Van der Laan~\cite{VanderLaan/03a} adjusts the SWAP phase of PAM by always performing the SWAP
that provides the best increase in the ASW.
When no further improvement is found, a (local) maximum of the ASW has been achieved.
However, where the original PAM efficiently computes only the change in its loss
(in $O(N-k)$ time for each of $(N-k)k$ swap candidates),
PAMSIL computes the entire ASW in $O(N^2)$ for every candidate,
and hence the run time per iteration increases to $O(k(N-k)N^2)$.
For a small $k$, this yields a run time that is cubic in the number of objects $N$,
and the algorithm may need several iterations to converge.
\subsection{Naive Medoid Silhouette Clustering}
PAMMEDSIL~\cite{VanderLaan/03a} uses the Average Medoid Silhouette (AMS) instead,
which can be evaluated in only $O(Nk)$ time. This yields a SWAP run time
of $O(k^2(N-k)N)$ (for small $k\ll N$ only quadratic in~$N$).
\begin{algorithm2e}[tb!]
\caption{PAMMEDSIL SWAP: Iterative improvement}
\label{alg1}
\SetKwBlock{Repeat}{repeat}{}
\SetKw{Break}{break}
$S' \gets $ Simplified Silhouette sum of the initial solution $M$\;
\Repeat{
    $(S'_*, M_*)\gets(0,$null$)$\;
    \ForEach(\tcp*[f]{each medoid}\label{alg1-loop1}){$m_i\in M=\{m_1,\ldots,m_k\}$} {
        \ForEach(\tcp*[f]{each non-medoid}\label{alg1-loop2}){$x_j\notin\{m_1,\ldots,m_k\}$} {
            $(S',M') \gets (0, M \setminus \{m_i\} \cup \{x_j\})$\;
            \ForEach(\label{alg1-loop3}){$x_o\in X=\{x_1,\ldots,x_n\}$} {
                $S' \gets S' + s_o'(X,d,M')$\tcp*{ Simplified Silhouette}
            }
            \lIf(\tcp*[f]{keep best swap found}\label{alg1-if1}) {$S' > S'_*$} {
        		$(S'_*, M_*)\gets( S',M')$%
        	}
        }
    }
    \lIf{$S'_*\geq S'$}{\Break}
    $(S',M) \gets (S'_*,M_*)$\tcp*[r]{perform swap}
}
\Return {$(S' / N,M)$}\;
\end{algorithm2e}
As Schubert and Rousseeuw \cite{DBLP:journals/is/SchubertR21,DBLP:conf/sisap/SchubertR19} were able to reduce the run time of PAM
to $O(N^2)$ per iteration, %
we modify the PAMMEDSIL approach accordingly to obtain a similar speedup.
The SWAP algorithm of PAMMEDSIL is shown in Algorithm~\ref{alg1}.

\subsection{Finding the Best Swap}
\label{sec52}
We first bring PAMMEDSIL up to par with regular PAM.
The trick introduced with PAM is to compute the change in loss instead of recomputing the loss,
which can be done in $O(N-k)$ instead of $O(k(N-k))$ time if we store the distance to the
nearest and second centers, as the latter allows us to compute the change if the current nearest center is removed efficiently.
In the following, we omit the constant parameters $X$ and $d$ for brevity.
We denote the previously nearest medoid of $i$ as $\nearest(i)$,
and $d_1(i)$ is the (cached) distance to it. We similarly define $\second(i)$, $d_2(i)$, and $d_3(i)$
with respect to the second and third nearest medoid. We briefly use $d_1'$ and $d_2'$ to denote
the new distances for a candidate swap.
For the Medoid Silhouette, we can compute the change when swapping medoids
$m_i\in\{m_1,\ldots,m_k\}$ with non-medoids $x_j\notin\{m_1,\ldots,m_k\}$:
\begin{align*}
\Delta\MS &= \tfrac{1}{n} \tsum_{o=1}^n \Delta\ms_o(M,m_i,x_j)
\\
\Delta\ms_o(M,m_i,x_j) &=  \ms_o(M \setminus \{m_i\} \cup \{x_j\}) - \ms_o(M)
\\
&= \tfrac{d_2'(i) - d_1'(i)}{d_2'(i)} - \tfrac{d_2(i)-d_1(i)}{d_2(i)} = \tfrac{d_1(i)}{d_2(i)} - \tfrac{d_1'(i)}{d_2'(i)}
\;.
\end{align*}
Clearly, we only need the distances to the closest and second closest center,
before and after the swap. Instead of recomputing them, we exploit that only one
medoid can change in a swap. By determining the new values of $d_1'$ and $d_2'$
using cached values only, we can save a factor of $O(k)$ on the run time.

In the PAM algorithm (where the change would be simply $d_1'-d_1$),
the distance to the \emph{second} nearest is cached in order to compute the loss change
if the current medoid is removed, without having to consider all $k-1$ other medoids:
the point is then either assigned to the new medoid, or its former second closest.
To efficiently compute the change in Medoid Silhouette, we have to take this one step further,
and additionally need to cache the identity of the second closest center
and the distance to the \emph{third} closest center (denoted~$d_3$).
This is beneficial if, e.g., the nearest medoid is replaced.
Then we may have, e.g., $d_1'=d_2$ and $d_2'=d_3$, if we can distinguish these cases.

The change in Medoid Silhouette is then computed roughly as follows:
(1) If the new medoid is the new closest, the second closest is either the former nearest, or the second nearest (if the first was replaced).
(2) If the new medoid is the new second closest, the closest either remains the former nearest, or the second
nearest (if the first was replaced).
(3) If the new medoid is neither, we may still have replaced the closest or second closest;
in which case the distance to the third nearest is necessary to compute the new Silhouette.
Putting all the cases (and sub-cases) into one equation becomes a bit messy, and hence we
opt to use pseudocode in Algorithm~\ref{alg:change} instead of an equivalent mathematical notation.
\begin{algorithm2e}[tb]
\caption{Change in Medoid Silhouette, $\Delta\ms_o(M,m_i,x_j)$}
\label{alg:change}
\If(\tcp*[f]{nearest is replaced}){$m_i=\nearest(o)$}{
  \lIf(\tcp*[f]{xj is new nearest}){$d(o,j)< d_2(o)$}{
    \Return \tabto*{48mm} $\frac{d_1(o)}{d_2(o)}-\frac{d(o,j)}{d_2(o)}$
  }
  \lIf(\tcp*[f]{xj is new second}){$d(o,j)< d_2(o)$}{
    \Return \tabto*{48mm} $\frac{d_1(o)}{d_2(o)}-\frac{d_2(o)}{d(o,j)}$
  }
  \lElse {
    \Return \tabto*{48mm} $\frac{d_1(o)}{d_2(o)}-\frac{d_2(o)}{d_3(o)}$
  }
}
\ElseIf(\tcp*[f]{second nearest is replaced}){$m_i=\second(o)$}{
  \lIf(\tcp*[f]{xj is new nearest}){$d(o,j)< d_1(o)$}{
    \Return \tabto*{48mm} $\frac{d_1(o)}{d_2(o)}-\frac{d(o,j)}{d_1(o)}$
  }
  \lIf(\tcp*[f]{xj is new second}){$d(o,j)< d_3(o)$}{
    \Return \tabto*{48mm} $\frac{d_1(o)}{d_2(o)}-\frac{d_1(o)}{d(o,j)}$
  }
  \lElse {
    \Return \tabto*{48mm} $\frac{d_1(o)}{d_2(o)}-\frac{d_1(o)}{d_3(o)}$
  }
}
\Else{
  \lIf(\tcp*[f]{xj is new nearest}){$d(o,j)< d_1(o)$}{
    \Return \tabto*{48mm} $\frac{d_1(o)}{d_2(o)}-\frac{d(o,j)}{d_1(o)}$
  }
  \lIf(\tcp*[f]{xj is new second}){$d(o,j)< d_2(o)$}{
    \Return \tabto*{48mm} $\frac{d_1(o)}{d_2(o)}-\frac{d_1(o)}{d(o,j)}$
  }
  \lElse {
    \Return \tabto*{48mm} 0
  }
}
\end{algorithm2e}
Note that the first term is always the same (the previous loss), except for the last case,
where it canceled out via $0=\frac{d_1(o)}{d_2(o)}-\frac{d_1(o)}{d_2(o)}$.
As this is a frequent case, it is beneficial to not have further computations here
(and hence, to compute the change instead of computing the loss).
Clearly, this algorithm runs in $O(1)$ if $n_1(o)$, $n_2(o)$, $d_1(o)$, $d_2(o)$, and $d_3(o)$
are known. We also only compute $d(o,j)$ once.
Modifying PAMMEDSIL (Algorithm~\ref{alg1}) to use this computation yields a run time
of $O(k(N-k)N)$ to find the best swap, i.e., already $O(k)$ times faster.
But we can further improve this approach.\todo{Das ist momentan als ``PAMMEDSIL'' in den Experimenten, oder?
Dann benennen, und ``naive'' hinzufügen?}

\subsection{Fast Medoid Silhouette Clustering}

We now integrate an acceleration added to the PAM algorithm by
Schubert and Rousseeuw~\cite{DBLP:conf/sisap/SchubertR19,DBLP:journals/is/SchubertR21},
that exploits redundancy among the loop over the $k$ medoids to replace.
For this, the loss change $\Delta\MS(m_i,x_j)$ is split into multiple components:
(1)~the change by removing medoid $m_i$ (without choosing a replacement),
(2)~the change by adding $x_j$ as an additional medoid, and
(3)~a correction term if both operations occur at the same time.
The first factors can be computed in $O(kN)$, the second in $O(N(N-k))$, and the last
factor is~0 if the removed medoid is neither of the two closest, and hence is also in $O(N^2)$.
This then yields an algorithm that finds the best swap in $O(N^2)$, again $O(k)$ times faster.

The first terms (the removal of each medoid $m_i\in M$) are computed as:
\begin{align}
\Delta\MS^{-m_i} =& \sum\nolimits_{\nearest(o)=i} \tfrac{d_1(o)}{d_2(o)}-\tfrac{d_2(o)}{d_3(o)}
+ \sum\nolimits_{\second(o)=i} \tfrac{d_1(o)}{d_2(o)}-\tfrac{d_1(o)}{d_3(o)}
\;, \label{eq:removal-mi}
\shortintertext{
while for the second we compute the addition of a new medoid $x_j\not\in M$
}
\Delta\MS^{+x_j} =& \sum_{o=1}^n \begin{cases}
\frac{d_1(o)}{d_2(o)}-\frac{d(o,j)}{d_1(o)}      & \text{if }d(o,j)< d_1(o) \\
\frac{d_1(o)}{d_2(o)}-\frac{d_1(o)}{d(o,j)}     & \text{else if }d(o,j)< d_2(o) \\
0     & \text{otherwise}
\;.
\end{cases}
\notag
\shortintertext{
Combining these yields the change:
}
    \Delta\MS(m_i,x_j) =& \Delta\MS^{+x_j} + \Delta\MS^{-m_i} \notag\\ 
    &+ \sum_{\substack{o \text{ with}\\\nearest(o)=i}} \begin{cases}
    \frac{d(o,j)}{d_1(o)}+\frac{d_2(o)}{d_3(o)}-\frac{d_1(o)+d(o,j)}{d_2(o)} & \text{if }d(o,j)< d_1(o)\\ 
    \frac{d_1(o)}{d(o,j)}+\frac{d_2(o)}{d_3(o)}-\frac{d_1(o)+d(o,j)}{d_2(o)} & \text{else if }d(o,j)< d_2(o) \\
    \frac{d_2(o)}{d_3(o)}-\frac{d_2(o)}{d(o,j)}     & \text{else if }d(o,j)< d_3(o) \\
    0     & \text{otherwise} 
    \end{cases} \notag\\
    &+ \sum_{\substack{o \text{ with}\\\second(o)=i}} \begin{cases}
    \frac{d_1(o)}{d_3(o)}-\frac{d_1(o)}{d_2(o)}     & \text{if }d(o,j)< d_1(o) \\ 
    \frac{d_1(o)}{d_3(o)}-\frac{d_1(o)}{d_2(o)}     & \text{else if }d(o,j)< d_2(o) \\
    \frac{d_1(o)}{d_3(o)}-\frac{d_1(o)}{d(o,j)}     & \text{else if }d(o,j)< d_3(o) \\
    0     & \text{otherwise} 
\;.
    \end{cases}
\notag
\end{align}
It is easy to see that the additional summands can be computed by iterating over all objects $x_o$,
and adding their contributions to accumulators for $n_1(o)$ and $n_2(o)$.
As each object $o$ contributes to exactly two cases, the run time is $O(N)$.
\begin{algorithm2e}[tb!]
\caption{FastMSC: Improved SWAP algorithm}
\label{alg:fastpms}
\SetKwBlock{Repeat}{repeat}{}
\SetKw{BreakOuterLoopIf}{break outer loop if}
\Repeat{
\lForEach(\label{alg:fastpms-loop1}){$x_o$}{compute $\nearest(o), \second(o), d_1(o), d_2(o), d_3(o)$} 
    $\Delta\MS^{-m_1},\ldots,\Delta\MS^{-m_i} \gets$ compute loss change removing $m_i$ using \eqref{eq:removal-mi}\;
    $(\Delta\MS^*, m^*, x^*)\gets(0,$null$,$null$)$\;
    \ForEach(\tcp*[f]{each non-medoid}\label{alg1-loop2}){$x_j\notin\{m_1,\ldots,m_k\}$} {
        $\Delta\MS_i,\ldots,\Delta\MS_k\gets(\Delta\MS^{-m_1},\ldots,\Delta\MS^{-m_i})$\label{alg:fastpmsl6}\tcp*[r]{use removal loss}
        $\Delta\MS^{+x_j}\gets0$\label{alg:fastpmsl7}\tcp*[r]{initialize shared accumulator}
        \ForEach(\label{alg1-loop3}){$x_o\in\{x_1,\ldots,x_n\}$} {
            $d_{oj}\gets d(x_o,x_j)$\tcp*[r]{distance to new medoid}
            \If(\tcp*[f]{new closest}\label{alg:fastpmsl10}) {$d_{oj} < d_1(o)$} {
            $\Delta\MS^{+x_j}\gets\Delta\MS^{+x_j}+d_1(o)/d_2(o)-d_{oj}/d_1(o)$\label{alg:fastpmsl11}\;
        	$\Delta\MS_{\nearest(o)}\gets \Delta\MS_{\nearest(o)}+ d_{oj}/d_1(o) + d_2(o)/d_3(o) - \tfrac{d_1(o)+d_{oj}}{d_2(o)}$\label{alg:fastpmsl12}\;
        	$\Delta\MS_{\second(o)}\gets \Delta\MS_{\second(o)}+d_1(o)/d_3(o) - d_1(o)/d_2(o)$\;
            }
            \ElseIf(\tcp*[f]{new first/second closest}\label{alg:fastpms-if2}) {$d_{oj} < d_2(o)$} {
            $\Delta\MS^{+x_j}\gets\Delta\MS^{+x_j}+d_1(o)/d_2(o)-d_1(o)/d_{oj}$\label{alg:fastpmsl15}\;
        	$\Delta\MS_{\nearest(o)}\gets \Delta\MS_{\nearest(o)}+d_1(o)/d_{oj} + d_2(o)/d_3(o) - \tfrac{d_1(o)+d_{oj}}{d_2(o)}$\label{alg:fastpmsl16}\;
        	$\Delta\MS_{\second(o)}\gets \Delta\MS_{\second(o)}+d_1(o)/d_3(o) - d_1(o)/d_2(o)$\;
            }
            \ElseIf(\tcp*[f]{new second/third closest}\label{alg:fastpms-if2}) {$d_{oj} < d_3(o)$} {
        	$\Delta\MS_{\nearest(o)}\gets \Delta\MS_{\nearest(o)}+d_2(o)/d_3(o) - d_2(o)/d_{oj}$\;
        	$\Delta\MS_{\second(o)}\gets \Delta\MS_{\second(o)}+d_1(o)/d_3(o) - d_1(o)/d_{oj}$\label{alg:fastpmsl20}\;
            }
        }
        $i\gets \text{argmax}\Delta\MS_i$\;
        $\Delta\MS_i\gets\Delta\MS_i+\Delta\MS^{+x_j}$\;
        \lIf(\label{alg1-if1}) {$\Delta\MS_i > \Delta\MS^*$} {
        	$(\Delta\MS^*, m^*, x^*)\gets(\Delta\MS,m_i,x_j)$
        }
    }
    \BreakOuterLoopIf{$\Delta\MS^*\leq0$}\;
    swap roles of medoid $m^*$ and non-medoid $x^*$\tcp*[r]{perform swap}
    $\MS\gets\MS+\Delta\MS^*$\;
}
\Return{$\MS,M$}\;
\end{algorithm2e}
This then gives Algorithm~\ref{alg:fastpms}, which computes
$\Delta\MS^{+x_j}$ along with the sum of $\Delta\MS^{-m_i}$ and
these correction terms in an accumulator array.
The algorithm needs $O(k)$ memory for the accumulators in the loop,
and $O(N)$ additional memory to store the cached $n_1$, $n_2$, $d_1$, $d_2$, and $d_3$ for each object.

This algorithm gives the same result,\todo{ggf. camera-ready wieder numerical differences erwähnen}
but FastMSC (``Fast Medoid Silhouette Clustering'') is $O(k^2)$ faster than the naive PAMMEDSIL.

\subsection{Eager Swapping and Random Initialization}

We can now integrate further improvements by
Schubert and Rousseeuw~\cite{DBLP:journals/is/SchubertR21}.
Because doing the best swap (steepest descent) does not appear to guarantee finding better solutions,
but requires a pass over the entire data set for each step,
we can converge to local optima much faster if we perform every swap that yields an improvement,
even though this means we may repeatedly replacing the same medoid. For PAM they called this
eager swapping, and named the variant FasterPAM.
This does not improve theoretical run time (the last iteration will always require a pass over the
entire data set to detect convergence), but empirically reduces the number of iterations substantially.
It will no longer find the same results, but there is no evidence that a steepest descent is beneficial
over choosing the first descent found.
The main downside to this is, that it increases the dependency on the data ordering, and hence is best
used on shuffled data when run repeatedly.
Similarly, we will study a variant that eagerly performs the first swap that improves the AMS
as FasterMSC (``Fast and Eager Medoid Silhouette Clustering'').

Also, the classic initialization with PAM BUILD now becomes the performance bottleneck,
and Schubert and Rousseeuw~\cite{DBLP:journals/is/SchubertR21} showed that random initialization
in combination with eager swapping works very well.

\section{Experiments}
We next evaluate clustering quality, to show the benefits of optimizing AMS.
We report both AMS and ASW, as well as the supervised measures Adjusted Random Index (ARI) and
Normalized Mutual Information (NMI).
Afterward, we study the scalability,
to verify the expected speedup for our algorithm FastMSC.

\subsection{Data Sets}
Since it became possible to map gene expression at the single-cell level by RNA sequencing, clustering on these has become a popular task, and Silhouette is a popular evaluation measure there.
Single-cell RNA sequencing (scRNA-seq) provides high-dimensional data that requires appropriate preprocessing to extract information.
After extraction of significant genes, these marker genes are validated by clustering of proper cells.
\begin{figure}[tb!]
\begin{subfigure}{0.45\textwidth}\centering
\begin{tikzpicture}[font=\small]
\begin{axis}[unit vector ratio*=1 1 1, width=1.1\textwidth, xmin = -100, xmax = 150, ymin = -100, ymax = 150, xlabel={PC1}, ylabel={PC2}, xticklabels={,,}, yticklabels={,,}]
\addplot +[only marks, mark=text, text mark=1, mark options={scale=0.6}, Pa1] table [x=b, y=c, col sep=comma] {results/kolodziejczyk/counttable_al_2i.csv};\label{plot1_l_1}
\addplot +[only marks, mark=text, text mark=2, mark options={scale=0.6}, Pa2] table [x=b, y=c, col sep=comma] {results/kolodziejczyk/counttable_al_a2i.csv};\label{plot1_l_2}
\addplot +[only marks, mark=text, text mark=3, mark options={scale=0.6}, Pa3] table [x=b, y=c, col sep=comma] {results/kolodziejczyk/counttable_al_lif.csv};\label{plot1_l_3}
\addlegendimage{/pgfplots/refstyle=plot1_l_1}\addlegendentry{2i}
\addlegendimage{/pgfplots/refstyle=plot1_l_2}\addlegendentry{a2i}
\addlegendimage{/pgfplots/refstyle=plot1_l_3}\addlegendentry{lif}
\end{axis}
\end{tikzpicture}
\caption{Kolodziejczyk et al.~\cite{Kolodziejczyk/15a}}
\label{plot2_1}
\end{subfigure}
\begin{subfigure}{0.45\textwidth}\centering
\begin{tikzpicture}[font=\small]
\begin{axis}[unit vector ratio*=1 1 1, width=1.1\textwidth, xmin = -30, xmax = 70, ymin = -50, ymax = 50, xlabel={PC2}, ylabel={PC1}, xticklabels={,,}, yticklabels={,,}]
\addplot +[only marks, mark=text, text mark=1, mark options={scale=0.6}, Pa1] table [x=c, y=b, col sep=comma] {results/klein/counttable_klein_lif.csv};\label{plot2_l_1}
\addplot +[only marks, mark=text, text mark=2, mark options={scale=0.6}, Pa2] table [x=c, y=b, col sep=comma] {results/klein/counttable_klein_2d.csv};\label{plot2_l_2}
\addplot +[only marks, mark=text, text mark=4, mark options={scale=0.6}, Pa3] table [x=c, y=b, col sep=comma] {results/klein/counttable_klein_4d.csv};\label{plot2_l_3}
\addplot +[only marks, mark=text, text mark=7, mark options={scale=0.6}, Pa4] table [x=c, y=b, col sep=comma] {results/klein/counttable_klein_7d.csv};\label{plot2_l_4}
\addlegendimage{/pgfplots/refstyle=plot2_l_1}\addlegendentry{lif}
\addlegendimage{/pgfplots/refstyle=plot2_l_2}\addlegendentry{+2 days}
\addlegendimage{/pgfplots/refstyle=plot2_l_3}\addlegendentry{+4 days}
\addlegendimage{/pgfplots/refstyle=plot2_l_4}\addlegendentry{+7 days}
\end{axis}
\end{tikzpicture}
\caption{Klein et al.~\cite{Klein/15a}}
\label{plot2_2}
\end{subfigure}
\caption{Different kind of mouse embryonic stem cells (mESCs). For both data sets we have done PCA and plot the first two principal components. (a) shows 704 mESCs grown in three different conditions and (b) 2717 mESCs at the moment of LIF withdrawal, 2 days after, 4 days after, and 7 days after.}
\label{plot2}
\end{figure}
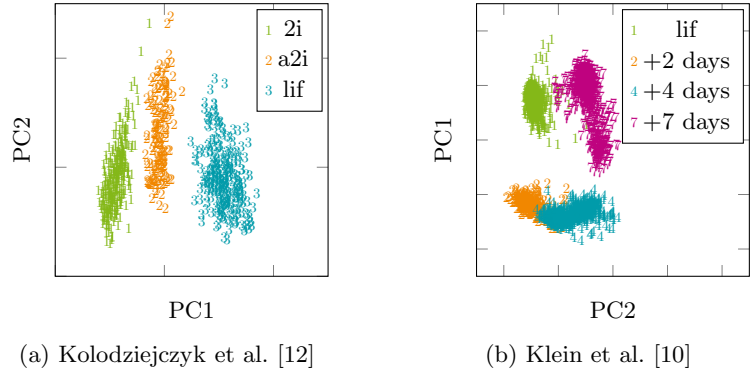

We explore two larger sample size (by scRNA standards) scRNA-sequencing data sets of mouse embryonic stem cells (mESCs) publicly available.
Kolodziejczyk et al.~\cite{Kolodziejczyk/15a} studied 704 mESCs with 38561 genes grown in three different conditions (2i, a2i and serum).
Klein et al.~\cite{Klein/15a} worked on the influence leukemia inhibitory factor (LIF) withdrawal on mESCs.
For this, he studied a total of 2717 mESCs with 24175 genes.
The data included 933 cells after LIF-withdrawal, 303 cells two days after, 683 cells 4 days after, and 798 cells 7 days after.
We normalize each cell by total counts over all genes, so that every cell has a total count equal to the median of total counts for observations (cells) before normalization, then we perform principal component analysis (PCA) and use the first three\todo{non-normalized?} principal components for clustering. 
To test the scalability of our new variants, we need larger data sets.
We use the well-known MNIST data set, with 784 features and 60000 samples
(PAMSIL will not be able to handle this size in reasonable time).
We implemented our algorithms in Rust, extending the \texttt{kmedoids} package~\cite{Schubert/22a},
wrapped with Python, and we make our source code available in this package.
We perform all computations in the same package,
to avoid side effects caused by comparing too different implementations~\cite{Kriegel/Schubert/17a}.
We run 10 restarts on an AMD EPYC 7302 processor using a single thread, and evaluate the average values.

\subsection{Clustering Quality}
We evaluated all methods with PAM BUILD initialization and a random initialization.
To evaluate the relevancy of the Average Silhouette Width and the Average Medoid Silhouette,
we compare true labels using the Adjusted Rand Index (ARI) and Normalized Mutual Information (NMI), two common measures in clustering.
\begin{table}[tb]\centering
\caption{Clustering results for the scRNA-seq data sets of Kolodziejczyk et al.~\cite{Kolodziejczyk/15a} for PAM, PAMSIL, and all variants of PAMMEDSIL. All methods are evaluated for BUILD and Random initialization, and true known $k$=3.}
\label{tab1}
\setlength{\tabcolsep}{6pt}
\begin{tabular}{ l|l|r|r|r|r|r } 
Algorithm & Initialization & AMS & ASW & ARI & NMI & run time (ms) \\
\hline
PAM & BUILD & 0.66 & 0.64 & 0.69 & 0.65 & 18.26 \\ 
PAM & Random & 0.66 & 0.64 & 0.69 & 0.65 & 22.67\\ 
PAMMEDSIL & BUILD & \bf0.67 & 0.65 & \bf0.72 & 0.70 & 62.63 \\ 
PAMMEDSIL & Random & \bf0.67 & 0.65 & \bf0.72 & 0.70 & 61.91 \\ 
FastMSC & BUILD & \bf0.67 & 0.65 & \bf0.72 & 0.70 & 25.09 \\
FastMSC & Random & \bf0.67 & 0.65 & \bf0.72 & 0.70 & 24.67 \\
FasterMSC & BUILD & \bf0.67 & 0.65 & \bf0.72 & 0.70 & \bf9.95 \\
FasterMSC & Random & \bf0.67 & 0.65 & \bf0.72 & 0.70 & 10.95 \\
PAMSIL & BUILD & 0.61 & \bf0.66 & \bf0.72 & \bf0.71 & 12493.86 \\
PAMSIL & Random & 0.61 & \bf0.66 & \bf0.72 & \bf0.71 & 16045.47 \\
\end{tabular}
\end{table}
On the data set from Kolodziejczyk shown in Table~\ref{tab1}, the highest ARI is achieved by the direct optimization methods for AMS and ASW. The different initialization provide the same results for all methods.
We get a much faster run time for the AMS variants compared to the ASW optimization.
For FasterMSC, we obtain the same ARI as for PAMSIL with 1255$\times$ faster run time and only a 0.01 lower NMI. As expected, AMS and ASW are optimal by those algorithms, that optimize for this measure,
but because the measures are correlated, those that optimize AMS only score 0.01 worse on the ASW.
Interestingly the total deviation used by PAM appears to be slightly more correlated to AMS than ASW
in this experiment.
Given the small difference, we argue that AMS is a suitable approximation for ASW,
at a much reduced run time.

Since there were no variations in the resulting medoids for the different restarts of the experiment, we can easily compare single results visually.
Figure~\ref{fig52} compares the results of PAMMEDSIL and PAMSIL, showing which
points are clustered differently than in the given labels.
Both clusters are similar, with class 1 captured better in one, classes 2 and 3 better in the other result.
\begin{figure}[tb]
\centering
\begin{subfigure}{0.45\textwidth}
\begin{tikzpicture}[font=\small]
\begin{axis}[unit vector ratio*=1 1 1, width=1.1\textwidth, xmin = -100, xmax = 150, ymin = -100, ymax = 150, xlabel={PC1}, ylabel={PC2}, xticklabels={,,}, yticklabels={,,}]
\addplot +[mark options={scale=0.6}, only marks, mark=text, text mark=1, fill=black, color = black] table [x=b, y=c, col sep=comma] {results/kolodziejczyk/zpammedsil_buildlabels_false.csv};
\addplot +[mark options={scale=0.6}, only marks, mark=text, text mark=1,Pa1] table [x=b, y=c, col sep=comma] {results/kolodziejczyk/zpammedsil_buildlabels_true.csv};
\addplot +[mark options={scale=0.6}, only marks, mark=text, text mark=2, fill=black, color = black] table [x=b, y=c, col sep=comma] {results/kolodziejczyk/zpammedsil_buildlabels1_false.csv};
\addplot +[mark options={scale=0.6}, only marks, mark=text, text mark=2, Pa2] table [x=b, y=c, col sep=comma] {results/kolodziejczyk/zpammedsil_buildlabels1_true.csv};
\addplot +[mark options={scale=0.6}, only marks, mark=text, text mark=3, fill=black, color = black] table [x=b, y=c, col sep=comma] {results/kolodziejczyk/zpammedsil_buildlabels2_false.csv};
\addplot +[mark options={scale=0.6}, only marks, mark=text, text mark=3, Pa3] table [x=b, y=c, col sep=comma] {results/kolodziejczyk/zpammedsil_buildlabels2_true.csv};
\end{axis}
\end{tikzpicture}
\caption{Results for PAMMEDSIL (BUILD)}
\label{plot2_1}
\end{subfigure}
\begin{subfigure}{0.45\textwidth}
\begin{tikzpicture}[font=\small]
\begin{axis}[unit vector ratio*=1 1 1, width=1.1\textwidth, xmin = -100, xmax = 150, ymin = -100, ymax = 150, xlabel={PC1}, ylabel={PC2}, xticklabels={,,}, yticklabels={,,}]
\addplot +[mark options={scale=0.6}, only marks, mark=text, text mark=1, fill=black, color = black] table [x=b, y=c, col sep=comma] {results/kolodziejczyk/zpamsil_buildlabels_false.csv};
\addplot +[mark options={scale=0.6}, only marks, mark=text, text mark=1,Pa1] table [x=b, y=c, col sep=comma] {results/kolodziejczyk/zpamsil_buildlabels_true.csv};
\addplot +[mark options={scale=0.6}, only marks, mark=text, text mark=2, fill=black, color = black] table [x=b, y=c, col sep=comma] {results/kolodziejczyk/zpamsil_buildlabels1_false.csv};
\addplot +[mark options={scale=0.6}, only marks, mark=text, text mark=2,Pa2] table [x=b, y=c, col sep=comma] {results/kolodziejczyk/zpamsil_buildlabels1_true.csv};
\addplot +[mark options={scale=0.6}, only marks, mark=text, text mark=3, fill=black, color = black] table [x=b, y=c, col sep=comma] {results/kolodziejczyk/zpamsil_buildlabels2_false.csv};
\addplot +[mark options={scale=0.6}, only marks, mark=text, text mark=3,Pa3] table [x=b, y=c, col sep=comma] {results/kolodziejczyk/zpamsil_buildlabels2_true.csv};
\end{axis}
\end{tikzpicture}
\caption{Results for PAMSIL (BUILD)}
\label{fig52}
\end{subfigure}
\caption{Clustering results for the scRNA-seq data sets of Kolodziejczyk et al.~\cite{Kolodziejczyk/15a} for PAMMEDSIL and PAMSIL. All correctly predicted labels are colored by the corresponding cluster and all errors are marked as black. }
\end{figure}
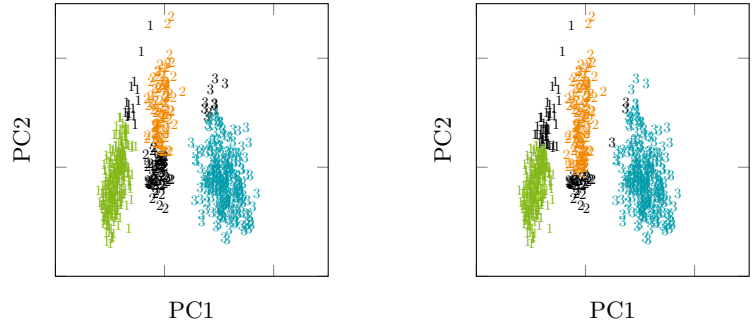
Table~\ref{tab2} shows the clustering results for the scRNA-seq data sets of Klein et al.~\cite{Klein/15a}.
In contrast to Kolodziejczyk's data set, we here obtain a higher ARI for PAMSIL than for the AMS optimization methods.\todo{was eigenartig ist, da die ASW fast gleich ist! Zufall?}
We get only the same high ARI and NMI for AMS optimization as for PAM, but a slightly higher ASW.
However, FasterMSC is 16521$\times$ faster than PAMSIL.

\begin{table}[tb]\centering
\caption{Clustering results for the scRNA-seq data sets of Klein et al.~\cite{Klein/15a} for PAM, PAMSIL and all variants of PAMMEDSIL. All methods are evaluated for BUILD and Random initialization and true known $k$=4.}
\label{tab2}
\setlength{\tabcolsep}{6pt}
\begin{tabular}{ l|l|r|r|r|r|r } 
Algorithm & Initialization & AMS & ASW & ARI & NMI & run time (ms) \\
\hline
PAM & BUILD & 0.75 & 0.82 & 0.84 & 0.87 & 355.55 \\ 
PAM & Random & 0.74 & 0.82 & 0.78 & 0.80 & 476.18\\ 
PAMMEDSIL & BUILD & \bf0.77 & 0.83 & 0.84 & 0.87 & 2076.15 \\ 
PAMMEDSIL & Random & \bf0.77 & 0.83 & 0.84 & 0.87 & 3088.77 \\ 
FastMSC & BUILD & \bf0.77 & 0.83 & 0.84 & 0.87 & 212.01 \\
FastMSC & Random & \bf0.77 & 0.83 & 0.84 & 0.87 & 305.00 \\
FasterMSC & BUILD & \bf0.77 & 0.83 & 0.84 & 0.87 & 163.74 \\
FasterMSC & Random & \bf0.77 & 0.83 & 0.84 & 0.87 & \bf122.63 \\
PAMSIL & BUILD & 0.67 & \bf0.84 & \bf0.95 & \bf0.92 & 2026025.10 \\
PAMSIL & Random & 0.67 & 0.84 & 0.93 & 0.91 & 1490354.10 \\
\end{tabular}
\end{table}

\subsection{Scalability}
To evaluate the scalability of our methods, we use the well-known MNIST data, which has 784 variables ($28{\times}28$ pixels) and 60000 samples. We use the first $n{=} 1000, \ldots, 30000$ samples and compare $k{=} 10$ and $k {=} 100$.
Due to its high run time, PAMSIL is not able to handle this size in a reasonable time.
In addition to the methods for direct AMS optimization, we evaluate the FastPAM1 and FasterPAM implementation.
For all methods we use random initialization.

\begin{figure}[tb!]\centering
    \begin{subfigure}{.49\textwidth}
	\begin{tikzpicture}[font=\tiny]
		\begin{axis}[
		    legend style={at={(.05,.95)},anchor=north west,fill=none,draw=none,inner sep=0,font=\tiny},
		    legend cell align={left},legend columns=2,
			height=23mm,
    	    width=\textwidth - 15mm,
    	    scale only axis,
		    every axis label/.style={inner sep=0, outer sep=0},
			xlabel = {number of samples},
			xmin = 1000, xmax = 30000,
			ylabel = {run time (s)},
			ymin = 0, ymax = 1000,
			yticklabel style={/pgf/number format/fixed},
			yticklabel style={/pgf/number format/1000 sep=},
	    	xtick={1000,5000,10000,15000,20000,25000,30000},
			xticklabel style={/pgf/number format/fixed},
			xticklabel style={/pgf/number format/1000 sep=},
			scaled x ticks=false
			]
			\addplot[Pa1, mark=triangle*]coordinates {
			(1000, 0.198)
			(5000, 7.075)
			(10000, 31.187)
			(15000, 62.150)
			(20000, 138.592)
			(25000, 172.462)
			(30000, 263.539)
			}; \label{plot_fpms}
			\addplot[Pa2, mark=triangle*]coordinates {
			(1000, 0.023)
			(5000, 0.531)
			(10000, 1.734)
			(15000, 5.238)
			(20000, 8.723)
			(25000, 10.502)
			(30000, 14.397)
			}; \label{plot_fepms}
			\addplot[Pa3, mark=diamond*]coordinates {
			(1000, 13.124)
			(5000, 212.825)
			(10000, 1657.221)
			(15000, 3298.933)
			(20000, 4847.659)
			(25000, 7343.586)
			(30000, 12568.364)
			}; \label{plot_pms}
			\addplot[Pa4, mark=oplus*]coordinates {
			(1000, 0.165)
			(5000, 1.056)
			(10000, 2.904)
			(15000, 3.473)
			(20000, 8.076)
			(25000, 16.969)
			(30000, 43.808)
			}; \label{plot_fp}
			\addplot[Pa5, mark=pentagon*]coordinates {
			(1000, 0.039)
			(5000, 2.213)
			(10000, 13.783)
			(15000, 30.483)
			(20000, 58.334)
			(25000, 137.264)
			(30000, 201.293)
			}; \label{plot_fp1}
			\addlegendimage{/pgfplots/refstyle=plot_fpms}\addlegendentry{FastMSC}
			\addlegendimage{/pgfplots/refstyle=plot_fepms}\addlegendentry{FasterMSC}
			\addlegendimage{/pgfplots/refstyle=plot_pms}\addlegendentry{PAMMEDSIL}
			\addlegendimage{/pgfplots/refstyle=plot_fp}\addlegendentry{FasterPAM}
			\addlegendimage{/pgfplots/refstyle=plot_fp1}\addlegendentry{FastPAM1}
		\end{axis}
	\end{tikzpicture}
	\caption{run time with $k{=}10$, linear scale}
	\end{subfigure}
	\hfill
    \begin{subfigure}{.49\textwidth}
	\begin{tikzpicture}[font=\tiny]
		\begin{axis}[
		    legend style={at={(0.70,0.35)},anchor=north,fill=none,draw=none,inner sep=0,font=\tiny},
		    legend cell align={left},legend columns=2,
    	    scale only axis,
		    every axis label/.style={inner sep=0, outer sep=0},
			height=23mm,
    	    width=\textwidth - 15mm,
			xlabel = {number of samples (log scale)},
			xmin = 1000, xmax = 30000,
			ylabel = {run time (s, log scale)},
			ymin = 0, ymax = 13000,
			ymode=log, xmode=log,
			yticklabel style={/pgf/number format/fixed},
			yticklabel style={/pgf/number format/1000 sep=},
			xticklabel style={/pgf/number format/fixed},
			xticklabel style={/pgf/number format/1000 sep=},
			scaled x ticks=false
			]
			\addplot[Pa1, mark=triangle*]coordinates {
			(1000, 0.198)
			(5000, 7.075)
			(10000, 31.187)
			(15000, 62.150)
			(20000, 138.592)
			(25000, 172.462)
			(30000, 263.539)
			}; \label{plot_fpms}
			\addplot[Pa2, mark=triangle*]coordinates {
			(1000, 0.023)
			(5000, 0.531)
			(10000, 1.734)
			(15000, 5.238)
			(20000, 8.723)
			(25000, 10.502)
			(30000, 14.397)
			}; \label{plot_fepms}
			\addplot[Pa3, mark=diamond*]coordinates {
			(1000, 13.124)
			(5000, 212.825)
			(10000, 1657.221)
			(15000, 3298.933)
			(20000, 4847.659)
			(25000, 7343.586)
			(30000, 12568.364)
			}; \label{plot_pms}
			\addplot[Pa4, mark=oplus*]coordinates {
			(1000, 0.165)
			(5000, 1.056)
			(10000, 2.904)
			(15000, 3.473)
			(20000, 8.076)
			(25000, 16.969)
			(30000, 43.808)
			}; \label{plot_fp}
			\addplot[Pa5, mark=pentagon*]coordinates {
			(1000, 0.039)
			(5000, 2.213)
			(10000, 13.783)
			(15000, 30.483)
			(20000, 58.334)
			(25000, 137.264)
			(30000, 201.293)
			}; \label{plot_fp1}
		\end{axis}
	\end{tikzpicture}
	\caption{run time with $k{=}10$, log-log plot}
	\end{subfigure}
	\\
	\begin{subfigure}{.49\textwidth}
	\begin{tikzpicture}[font=\tiny]
		\begin{axis}[
		    legend style={at={(0.70,0.35)},anchor=north,fill=none,draw=none,inner sep=0,font=\tiny},
		    legend cell align={left},legend columns=2,
    	    scale only axis,
		    every axis label/.style={inner sep=0, outer sep=0},
			height=23mm,
    	    width=\textwidth - 15mm,
			xlabel = {number of samples},
			xmin = 1000, xmax = 30000,
			ylabel = {run time (s)},
			ymin = 0, ymax = 2500,
			ytick={500,1000,1500,2000,2500},
			yticklabel style={/pgf/number format/fixed},
			yticklabel style={/pgf/number format/1000 sep=},
			xtick={1000,5000,10000,15000,20000,25000,30000},
			xticklabel style={/pgf/number format/fixed},
			xticklabel style={/pgf/number format/1000 sep=},
			scaled x ticks=false
			]
			\addplot[Pa1, mark=triangle*]coordinates {
			(1000, 0.21)
			(5000, 5.745)
			(10000, 22.975)
			(15000, 52.596)
			(20000, 93.356)
			(25000, 150.420)
			(30000, 210.180)
			}; \label{plot_fpms}
			\addplot[Pa2, mark=triangle*]coordinates {
			(1000, 0.01879)
			(5000, 0.661)
			(10000, 2.265)
			(15000, 4.025)
			(20000, 8.600)
			(25000, 11.023)
			(30000, 17.910)
			}; \label{plot_fepms}
			\addplot[Pa3, mark=diamond*]coordinates {
			(1000, 2014.386)
			(5000, 63854.893)
			(10000, 245499.702)
			}; \label{plot_pms}
			\addplot[Pa4, mark=oplus*]coordinates {
			(1000, 0.313)
			(5000, 1.171)
			(10000, 3.444)
			(15000, 5.547)
			(20000, 15.912)
			(25000, 32.377)
			(30000, 41.221)
			}; \label{plot_fp}
			\addplot[Pa5, mark=pentagon*]coordinates {
			(1000, 0.149)
			(5000, 5.254)
			(10000, 13.345)
			(15000, 30.426)
			(20000, 53.143)
			(25000, 85.543)
			(30000, 135.455)
			}; \label{plot_fp1}
		\end{axis}
	\end{tikzpicture}
	\caption{run time with $k{=}100$, linear scale}
	\end{subfigure}
	\hfill
	\begin{subfigure}{.49\textwidth}
	\begin{tikzpicture}[font=\tiny]
		\begin{axis}[
		    legend style={at={(0.70,0.35)},anchor=north,fill=none,draw=none,inner sep=0,font=\tiny},
		    legend cell align={left},legend columns=2,
    	    scale only axis,
		    every axis label/.style={inner sep=0, outer sep=0},
			height=23mm,
    	    width=\textwidth - 15mm,
			xlabel = {number of samples (log scale)},
			xmin = 1000, xmax = 30000,
			ylabel = {run time (s, log scale)},
			ymin = 0, ymax = 300000,
			ymode=log, xmode=log,
			yticklabel style={/pgf/number format/fixed},
			yticklabel style={/pgf/number format/1000 sep=},
			xticklabel style={/pgf/number format/fixed},
			xticklabel style={/pgf/number format/1000 sep=},
			scaled x ticks=false
			]
			\addplot[Pa1, mark=triangle*]coordinates {
			(1000, 0.21)
			(5000, 5.745)
			(10000, 22.975)
			(15000, 52.596)
			(20000, 93.356)
			(25000, 150.420)
			(30000, 210.180)
			}; \label{plot_fpms}
			\addplot[Pa2, mark=triangle*]coordinates {
			(1000, 0.01879)
			(5000, 0.661)
			(10000, 2.265)
			(15000, 4.025)
			(20000, 8.600)
			(25000, 11.023)
			(30000, 17.910)
			}; \label{plot_fepms}
			\addplot[Pa3, mark=diamond*]coordinates {
			(1000, 2014.386)
			(5000, 63854.893)
			(10000, 245499.702)
			}; \label{plot_pms}
			\addplot[Pa4, mark=oplus*]coordinates {
			(1000, 0.190)
			(5000, 1.776)
			(10000, 3.976)
			(15000, 5.571)
			(20000, 14.372)
			(25000, 36.438)
			(30000, 26.592)
			}; \label{plot_fp}
			\addplot[Pa5, mark=pentagon*]coordinates {
			(1000, 0.149)
			(5000, 5.254)
			(10000, 13.345)
			(15000, 30.426)
			(20000, 53.143)
			(25000, 85.543)
			(30000, 135.455)
			}; \label{plot_fp1}
		\end{axis}
	\end{tikzpicture}
	\caption{run time with $k{=}100$, log-log plot}
	\end{subfigure}
	\caption{Run time on MNIST data (time out 24 hours)}
	\label{fig5}
\end{figure}
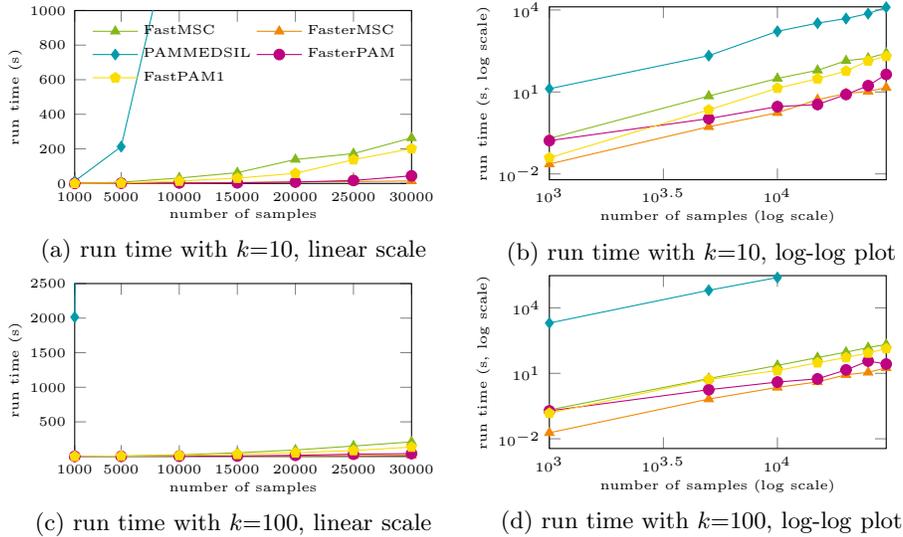
As expected, all methods scale approximately quadratic in the sample size $n$.
FastMSC is on average 50.66x faster than PAMMEDSIL for $k{=}10$ and 10464.23$\times$ faster for $k{=}100$,
supporting the expected $O(k^2)$ improvement by removing the nested loop and caching the distances to the nearest centers.
For FasterMSC we achieve even 639.34$\times$ faster run time than for PAMMEDSIL for $k$=10 and 78035.01$\times$ faster run time for $k$=100.
We expect FastPAM1 and FastMSC and also FasterPAM and FasterMSC to have similar scalability;
but since MSC needs additional bounds it needs to maintain more data and access more memory.
We observe that FastPAM1 is 2.50$\times$ faster than FastMSC for $k{=}10$ and 1.57$\times$ faster for $k{=}100$,
which is larger than expected and due to more iterations necessary for convergence in the MSC methods: FastPAM1 needs on average 14.86 iterations while FastMSC needs 33.48.
In contrast, FasterMSC is even 1.65$\times$ faster than FasterPAM for $k{=}10$ and 1.96$\times$ faster for $k{=}100$.

\section{Conclusions}
We showed that the Average Medoid Silhouette satisfies desirable
theoretical properties for clustering quality measures,
and as an approximation of the Average Silhouette Width
yields desirable results on real problems from gene expression analysis.
We propose a new algorithm for optimizing the Average Medoid Silhouette,
which provides a run time speedup of $O(k^2)$
compared to the earlier PAMMEDSIL algorithm
by avoiding unnecessary distance computations via caching of the distances to the nearest centers
and of partial results based on FasterPAM.
This makes clustering by optimizing the Medoid Silhouette possible on much larger data sets than before.
The ability to optimize a variant of the popular Silhouette measure directly
demonstrates the underlying problem that any internal cluster evaluation measure specifies a clustering itself.

\vfill\pagebreak
\bibliographystyle{splncs04}
\bibliography{references.bib}

\end{document}